\newtheorem{theorem}{Theorem}
\newcommand{\bbP}{\mathbb{P}}
\newcommand{\bw}{\mathbf{w}}
\newcommand{\be}{\mathbf{e}}
\newcommand{\bz}{\mathbf{z}}
\newcommand{\bp}{\mathbf{p}}
\newcommand{\calD}{\mathcal{D}}
\newcommand{\calM}{\mathcal{M}}
\DeclareMathOperator*{\argmax}{arg\,max}
\newcommand*{\imag}[1]{%
    \raisebox{-.2\baselineskip}{%
        \includegraphics[
        height=\baselineskip,
        width=\baselineskip,
        keepaspectratio,
        ]{#1}%
    }%
}
\newcommand*{\imagg}[1]{%
    \raisebox{-.1\baselineskip}{%
        \includegraphics[
        height=\baselineskip,
        width=\baselineskip,
        keepaspectratio,
        ]{#1}%
    }%
}
\begin{document}

\twocolumn[
\icmltitle{Analyzing Privacy Leakage in Machine Learning via Multiple Hypothesis Testing: A Lesson From Fano}



\icmlsetsymbol{equal}{*}

\begin{icmlauthorlist}
\icmlauthor{Chuan Guo}{meta}
\icmlauthor{Alexandre Sablayrolles}{meta}
\icmlauthor{Maziar Sanjabi}{meta}
\end{icmlauthorlist}

\icmlaffiliation{meta}{Meta AI}

\icmlcorrespondingauthor{Chuan Guo}{chuanguo@meta.com}

\icmlkeywords{Machine Learning, ICML}

\vskip 0.3in
]



\printAffiliationsAndNotice{} 

\begin{abstract}
Differential privacy (DP) is by far the most widely accepted framework for mitigating privacy risks in machine learning. However, exactly how small the privacy parameter $\epsilon$ needs to be to protect against certain privacy risks in practice is still not well-understood. In this work, we study data reconstruction attacks for discrete data and analyze it under the framework of multiple hypothesis testing.
For a learning algorithm satisfying $(\alpha, \epsilon)$-R\'{e}nyi DP, we utilize different variants of the celebrated Fano's inequality to upper bound the attack advantage of a data reconstruction adversary.
Our bound can be numerically computed to calibrate the privacy parameter $\epsilon$ to the desired level of privacy protection in practice, and complements the empirical evidence for the effectiveness of DP against data reconstruction attacks even at relatively large values of $\epsilon$.
\end{abstract}

\section{Introduction}
\label{sec:intro}

As machine learning becomes increasingly ubiquitous in the real world, proper understanding of the privacy risks of ML also becomes a crucial aspect for its safe adoption. Numerous prior works have demonstrated privacy vulnerabilities throughout the ML training pipeline~\citep{shokri2017membership, song2017machine, nasr2019comprehensive, zhu2019deep, carlini2021extracting}. So far the only comprehensive defense against privacy attacks is differential privacy (DP; \citet{dwork2006calibrating}), which has been successfully adapted for training private ML models~\citep{chaudhuri2011differentially, shokri2015privacy, abadi2016deep}. 

Unfortunately, differentially private training also comes at a huge cost to model accuracy if a small privacy parameter $\epsilon$ is desired.
In contrast, in terms of the level of empirical protection conferred by DP against privacy attacks, the picture is much more optimistic: Across a wide range of attacks including membership inference, attribute inference and data reconstruction, even a small amount of DP noise is sufficient for thwarting most attacks~\citep{jayaraman2019evaluating, zhu2019deep, carlini2019secret, hannun2021measuring}. However, there is very little theoretical understanding of this phenomenon.

In this paper, we analyze privacy leakage in connection to the multiple hypothesis testing problem in information theory, and show that the empirical privacy protection conferred by DP with high $\epsilon$ may be more than previously thought. To this end, we first define a game (see \autoref{fig:data_reconstruction}) between a private learner and an adversary that tries to perform a \emph{data reconstruction attack} against the learned model. We analyze this game using the celebrated Fano's inequality to derive upper bounds on the adversary's attack advantage when the model is trained differentially privately.

Our analysis reveals an interesting and practically important insight that the DP parameter $\epsilon$ should scale with the number of possible values $M$ that the private data can take on. When $M$ is large, \emph{e.g.}, $M=10^{10}$ when extracting social security numbers from a trained language model~\citep{carlini2019secret}, even a relatively large $\epsilon$ has sufficient protection against data reconstruction attacks. More generally, given an input data distribution and a DP parameter $\epsilon$, we give a numerical method for deriving an upper bound on the advantage of an arbitrary data reconstruction adversary. We empirically validate our bound against several existing attacks and show that it can provide useful guidance for selecting the appropriate value of $\epsilon$ in practice.

\begin{figure*}[t]
\centering
\includegraphics[width=.95\textwidth]{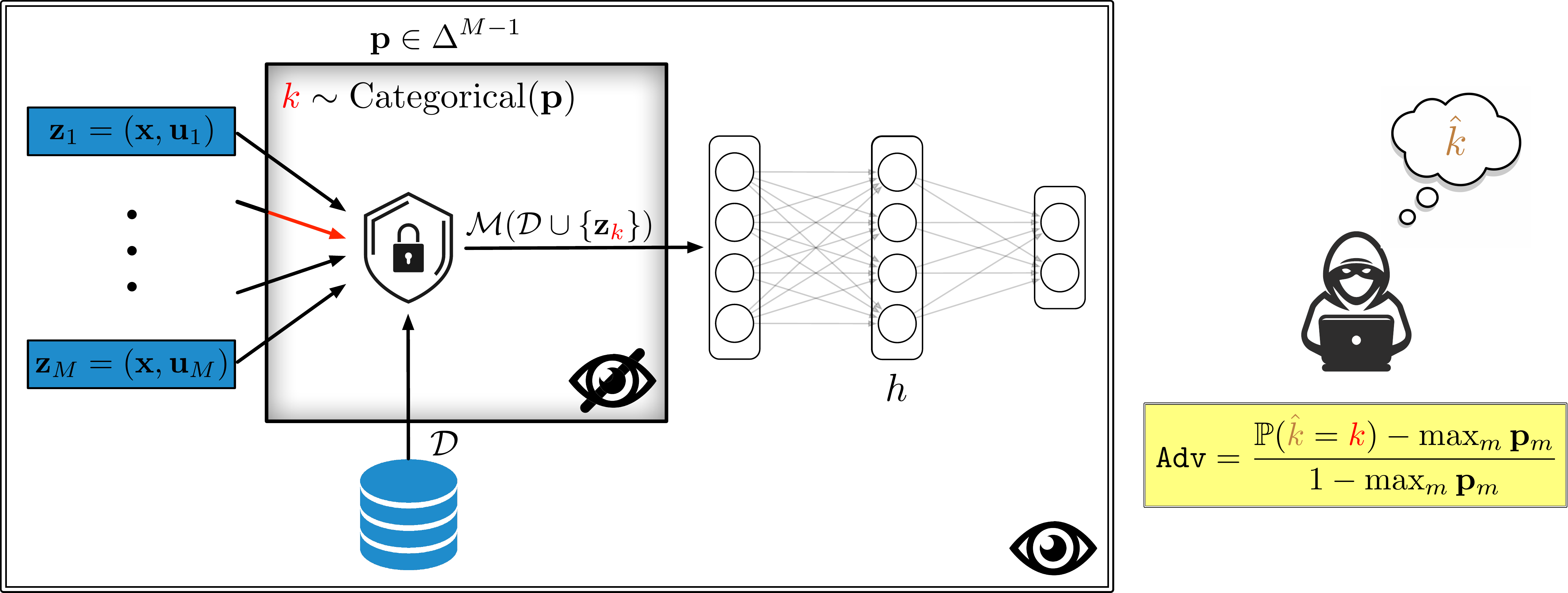}
\caption{Illustration of the data reconstruction attack game. The target data takes on $M$ possible values $\mathbf{u}_1,\ldots,\mathbf{u}_M$ with conditional probabilities $\mathbb{P}(\mathbf{u}_m | \mathbf{x}) = \mathbf{p}_m$. The game begins with ${\color{red} k}$ drawn from $\text{Categorical}(\mathbf{p})$, and the private learner \imag{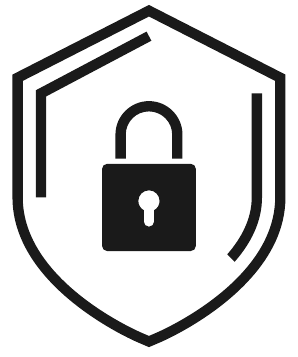} trains a model $h \leftarrow \mathcal{M}(\mathcal{D} \cup \{\mathbf{z}_{\color{red} k}\})$. The adversary guesses ${\color{brown} \hat{k}}$ and wins if ${\color{brown} \hat{k}} = {\color{red} k}$. Advantage is defined so that $\texttt{Adv} \leq 1$ and guessing ${\color{brown} \hat{k}} = \argmax_m \mathbf{p}_m$ achieves zero advantage. Processes inside the box marked with \imagg{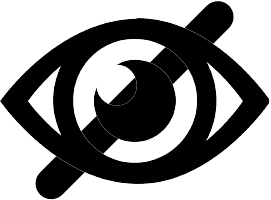} are unobserved by the adversary, while everything else is observable.}
\label{fig:data_reconstruction}
\end{figure*}


\paragraph{Contributions.} Our main contributions are the following:
\begin{enumerate}[noitemsep, leftmargin=*, topsep=0pt]
    \item We formalize data reconstruction attacks for discrete data as an attack game (\autoref{sec:data_reconstruction}).
    \item We use Fano's inequality to derive a numerical method that upper bounds the adversary's advantage for $(\alpha, \epsilon)$-R\'{e}nyi DP mechanisms (\autoref{sec:fano} and \autoref{sec:bounds}).
    \item We experimentally validate our advantage bound against existing attack and show that it can be used to guide the selection of $\epsilon$ in practice (\autoref{sec:experiment}).
\end{enumerate}
\section{Background and Motivation}
\label{sec:background}

\paragraph{Privacy attacks.} The machine learning pipeline exposes training samples to the outside world through the training procedure and/or trained model. Prior works showed that adversaries can exploit this exposure to compromise the privacy of training samples. The most well-studied type of privacy attack is \emph{membership inference attack}~\citep{shokri2017membership, salem2018ml, yeom2018privacy, sablayrolles2019white}, which aims to infer whether a sample $\mathbf{z}$ corresponding to an individual's data was part of the model's training set. This membership status can be a very sensitive attribute, \emph{e.g.}, whether or not an individual participated in a cancer study indicates their disease status. Most state-of-the-art attacks~\citep{ye2021enhanced, watson2021importance, carlini2022membership} follow a common strategy of measuring the model's loss compared to that of a random model trained without the target sample $\mathbf{z}$, with a large difference indicating that the sample was seen during training (\emph{i.e.}, a member).

Other privacy attacks can extract more detailed information about a training sample beyond membership status. \emph{Attribute inference attacks}~\citep{fredrikson2014privacy, yeom2018privacy} aim to reconstruct a training sample given access to the trained model and partial knowledge of the sample. \emph{Data reconstruction attacks}~\citep{fredrikson2015model, carlini2019secret, carlini2021extracting, balle2022reconstructing} relax the partial knowledge assumption of attribute inference attacks and can recover training samples given only the trained model. In federated learning~\citep{mcmahan2017communication}, adversaries that observe the gradient updates can reconstruct private training samples using a process called \emph{gradient inversion attack}~\citep{zhu2019deep, geiping2020inverting}. The existence of these privacy attacks calls for countermeasures that can preserve the utility of ML models while preventing unintended leakage of private information.

\textbf{Differential privacy}~\citep{dwork2006calibrating} is a mathematical definition of privacy that upper bounds the amount of information leakage through a private mechanism. In the context of ML, the private mechanism $\mathcal{M}$ is a learning algorithm that, given any pair of datasets $\mathcal{D}$ and $\mathcal{D}'$ that differ in a single training sample, ascertains that $\mathcal{M}(\mathcal{D})$ and $\mathcal{M}(\mathcal{D}')$ are $\epsilon$-close in distribution for some chosen privacy parameter $\epsilon > 0$. In the classical definition of differential privacy, $\epsilon$-closeness is defined in terms of \emph{max divergence}: $\mathcal{M}$ is $\epsilon$-differentially private (denoted $\epsilon$-DP) if:
\begin{align*}
    D&_\infty(\mathcal{M}(\mathcal{D}) || \mathcal{M}(\mathcal{D}')) := \nonumber \\
    &\sup_O \left[ \log \mathbb{P}(\mathcal{M}(\mathcal{D}) \in O) - \log \mathbb{P}(\mathcal{M}(\mathcal{D}') \in O) \right] \leq \epsilon,
\end{align*}
where $O$ denotes a subset of the model space. One variant of DP that uses R\'{e}nyi divergence to quantify closeness is \emph{R\'{e}nyi DP} (RDP; \citet{mironov2017renyi}). For a given $\alpha \geq 1$, we say that $\calM$ is $(\alpha,\epsilon)$-RDP if:
\begin{align*}
    D_\alpha(&\mathcal{M}(\mathcal{D}) || \mathcal{M}(\mathcal{D}')) \nonumber \\
    & := \frac{1}{\alpha-1} \log \mathbb{E}_{h \sim \mathcal{M}(\mathcal{D}')} \left[ \frac{\mathbb{P}(\mathcal{M}(\mathcal{D}))^\alpha}{\mathbb{P}(\mathcal{M}(\mathcal{D}'))^\alpha} \right] \leq \epsilon.
\end{align*}
Notably, as $\alpha \rightarrow \infty$, $(\alpha, \epsilon)$-RDP coincides with $\epsilon$-DP.

\paragraph{Semantic privacy.} Differential privacy has been shown to effectively prevent all of the aforementioned privacy attacks when the privacy parameter $\epsilon$ is small enough~\citep{jayaraman2019evaluating, zhu2019deep, carlini2019secret, hannun2021measuring}. Conceptually, when $\epsilon \approx 0$, the learning algorithm outputs roughly the same distribution of models when a single training sample $\mathbf{z}$ is removed/replaced, hence an adversary cannot accurately infer any private information about $\mathbf{z}$. However, this reasoning does not quantify \emph{how small $\epsilon$ needs to be to prevent a certain class of privacy attacks to a certain degree}. In practice, this form of \emph{semantic guarantee} is arguably more meaningful, as it may inform policy decisions regarding the suitable range of $\epsilon$ to provide sufficient privacy protection, and enables privacy auditing~\citep{jagielski2020auditing} to verify that the learning algorithm's implementation is compliant.

Several existing works made partial progress towards answering this question. \citet{yeom2018privacy} formalized membership inference attacks by defining a game between the private learner and an adversary, and showed that the adversary's \emph{advantage}---how well the adversary can infer a particular sample's membership status---is bounded by $e^\epsilon-1$ when the learning algorithm is $\epsilon$-DP. This bound has been tightened significantly in subsequent works~\citep{erlingsson2019we, humphries2020differentially, mahloujifar2022optimal, thudi2022bounding}. Similarly, \citet{bhowmick2018protection, balle2022reconstructing, guo2022bounding} formalized data reconstruction attacks and showed that for DP learning algorithm, the adversary's expected reconstruction error can be lower bounded using DP, R\'{e}nyi-DP~\citep{mironov2017renyi}, and Fisher information leakage~\citep{hannun2021measuring} privacy accounting. Our work makes further progress in this direction by analyzing data reconstruction attacks using tools from the multiple hypothesis testing literature, which we show is well-suited for discrete data.
\section{Formalizing Data Reconstruction}
\label{sec:data_reconstruction}

To understand the semantic privacy guarantee for DP mechanisms against data reconstruction attacks, we first formally define a data reconstruction game for discrete data. Our formulation generalizes the membership inference game in existing literature~\citep{yeom2018privacy, humphries2020differentially}, while specializing the formulation of \citet{balle2022reconstructing} to discrete data.

\paragraph{Data reconstruction game.} Let $\mathcal{D}_\text{train} = \mathcal{D} \cup \{\mathbf{z}\}$ be the training set consisting of a \emph{public set} $\mathcal{D}$ and a \emph{private record} $\mathbf{z} = (\mathbf{x}, \mathbf{u})$, where $\mathbf{x}$ are attributes known to the adversary and $\mathbf{u}$ is unknown.
Let $\mathcal{M}$ be the learning algorithm. We consider a white-box adversary with full knowledge of the public set $\mathcal{D}$ and the trained model $h = \mathcal{M}(\mathcal{D}_\text{train})$ whose objective is to infer the unknown attributes $\mathbf{u}$. Importantly, we assume that the unknown attribute is discrete (\emph{e.g.}, gender, race, marital status) and can take on $M$ values $\mathbf{u}_1,\ldots,\mathbf{u}_M$. For example, the experiment setting of \citet{carlini2019secret} can be stated as $\mathbf{x} = ``\texttt{My social security number is}"$ and $\mathbf{u} \in \{0,1,\ldots,9\}^{10}$ is the SSN number.

The attack game (see \autoref{fig:data_reconstruction} for an illustration) begins by drawing a random index $k$ from a categorical distribution defined by the probability vector $\bp$ and setting the unknown attribute $\mathbf{u} = \mathbf{u}_k$. The private learner $\mathcal{M}$ then trains a model $h$ with $\mathbf{z} = \mathbf{z}_k = (\mathbf{x}, \mathbf{u}_k)$ and gives it to the adversary, who then outputs a guess $\hat{k}$ of the underlying index $k$. Note that both the random index $k$ and any randomness in the learning algorithm $\mathcal{M}$ are unobserved by the adversary, but the learning algorithm itself is known.


\paragraph{Success metric.} We generalize the \emph{advantage} metric~\citep{yeom2018privacy} used in membership inference attack games to multiple categories. Here, the (Bayes) optimal guessing strategy without observing $h$ is to simply guess $\hat{k} = \argmax_m \bp_m$ with success rate $\max_m \bp_m$. The probability of successfully guessing $k$ upon observing $h$, \emph{i.e.}, $\mathbb{P}(\hat{k} = k)$, must be at least $\max_m \bp_m$ in order to meaningfully leverage the private information contained in $h$ about $\mathbf{u}$. Thus, we define advantage as the (normalized) difference between $\mathbb{P}(\hat{k} = k)$ and the \emph{baseline success rate} $p^* := \max_m \bp_m$, \emph{i.e.},
\begin{equation}
    \label{eq:advantage}
    \texttt{Adv} := \frac{\mathbb{P}(\hat{k} = k) - p^*}{1 - p^*} \in [0,1].
\end{equation}

\paragraph{Interpretation.} Our data reconstruction game has the following important implications for privacy semantics.

1. \emph{The private attribute $\mathbf{u}$ is considered leaked if and only if it is guessed exactly.} This is a direct consequence of defining adversary success as $\hat{k} = k$. For example, if the attribute is a person's age, then guessing $\hat{k} = 50$ when the ground truth is $k=49$ should be considered more successful than when $k=40$. In such settings, it may be more suitable to partition the input space into broader categories, \emph{e.g.}, age ranges 0-9, 10-19, \emph{etc.}, to allow inexact guesses.

2. \emph{The attack game subsumes attribute inference attacks.} This can be done by setting the known attribute $\mathbf{x}$ accordingly. When $\mathbf{x} = \emptyset$, our game corresponds to the scenario commonly referred to as \emph{data reconstruction} in existing literature~\citep{carlini2019secret, carlini2021extracting, balle2022reconstructing}.

3. \emph{Prior information is captured through the sampling probability $\bp$.} Success rate of the Bayes optimal strategy is $p^* = \max_m \bp_m$, which depends on the sampling probability vector $\bp$. In the extreme case where $\bp$ is a delta distribution on some $k^*$, which corresponds to the adversary having perfect knowledge of the private attribute $\mathbf{u}$, the model $h$ provides \emph{no additional information} about $\mathbf{u}$. This is in accordance with the ``no free lunch theorem'' in differential privacy, which says that inference of private information is not preventable when the adversary is given arbitrary prior information~\citep{dwork2010difficulties}.

4. \emph{Advantage captures attacker uncertainty.} One can also interpret advantage as a measure of uncertainty for the recovered value $\hat{k}$. Even when advantage is low, it is possible that the adversary recovers $k$ by chance. For instance, an attack that always guesses a fixed $\hat{k}$ can correctly recover the true input $k$ with non-trivial probability, but this should not be considered a successful recovery. By defining advantage in this manner, we consider an attack successful only when it can reduce its uncertainty sufficiently below the inherent randomness in $k$.

\paragraph{Relation to prior work.} Our attack game is a direct generalization of the membership inference attack game in \citet{yeom2018privacy}. In particular, the ``strong adversary'' game defined in \citet{humphries2020differentially, nasr2021adversary} is a special case of our data reconstruction game with $M=2$. 
Our game formulation also complements \citet{guo2022bounding}, which bounds the inferential power of a data reconstruction attack under the assumption that semantic similarity between private records is measured by mean squared error (MSE), which is better suited to continuous-valued data. In contrast, our formulation considers discrete data with the zero-one similarity measure, which is better suited for categorical data such as gender, race and marital status.

Our definition can also be viewed as a specialization of the data reconstruction attack game defined in \citet{balle2022reconstructing}, where the learning algorithm $\mathcal{M}$ is said to be $(\eta, \gamma)$-ReRo (reconstruction robust) if:
\begin{equation}
    \label{eq:rero}
    \mathbb{P}_{k \sim \text{Categorical}(\mathbf{p}), h \sim \mathcal{M}(\mathcal{D} \cup \{\mathbf{z}_k\})}(\ell(\mathbf{z}_k, \hat{\mathbf{z}}) \leq \eta) \leq \gamma,
\end{equation}
where $\hat{\mathbf{z}}$ is the adversary's reconstruction of $\mathbf{z}_k$ and $\ell$ is an error function that measures how faithful the reconstruction $\hat{\mathbf{z}}$ is to $\mathbf{z}_k$. By setting $\ell(\mathbf{z}_k, \hat{\mathbf{z}}) = \mathds{1}(\hat{\mathbf{z}} = \mathbf{z}_k)$ and $\eta = 0$, one can show that \autoref{eq:rero} is satisfied if and only if $\mathbb{P}(\hat{k} = k) \leq \gamma$, thus if $\mathcal{M}$ is $(\eta,\gamma)$-ReRo with $\eta<1$ then $\texttt{Adv} \leq (\gamma - p^*) / (1 - p^*)$.
\section{Connecting Data Reconstruction to Multiple Hypothesis Testing}
\label{sec:fano}

In this section, we analyze the data reconstruction game defined in \autoref{sec:data_reconstruction} and derive a numerical method for the advantage upper bound when $\mathcal{M}$ is differentially private. 

\paragraph{Fano's inequality.} The central tool in our analysis is Fano's inequality---a celebrated result in information theory that lower bounds the error rate in multiple hypothesis testing. Suppose that $X$ is a discrete random variable taking on $M$ different values, and $Y$ is an observed random variable determined by $X$, \emph{e.g.}, $Y$ is the output of the private learning algorithm $\calM$. Let $\hat{X}$ be an estimate of $X$, and define the error random variable $E = (\hat{X} \neq X) \in \{0,1\}$. Then:
\begin{equation}
    \label{eq:fano}
    H(X | Y) \leq H(E) + \bbP(E = 1) \log(M-1),
\end{equation}
where $H(E)$ denotes the entropy of $E$ and $H(X|Y)$ denotes the conditional entropy of $X$ given $Y$.


\paragraph{From Fano to advantage bound.} The attack game defined in \autoref{fig:data_reconstruction} essentially tasks the adversary with a \emph{multiple hypothesis test}, where the hypotheses are $H_k: h \sim \calM(\calD \cup \{\bz_k\})$ for $k=1,\ldots,M$. When $M=2$, we recover the binary hypothesis testing interpretation of membership inference attack~\citep{yeom2018privacy, humphries2020differentially}. Consequently, we can apply Fano's inequality to bound the adversary's advantage (\emph{cf.} Equation \ref{eq:advantage}) by setting $X = \text{Categorical}(\bp)$ and $Y = \mathcal{M}(\mathcal{D}_\text{train})$. In effect, $X$ encodes all the information contained in the sensitive attribute $\mathbf{u}$, and $Y$ is the output of a private mechanism that the adversary observes with partial information about $X$. For any adversary, let $t = \mathbb{P}(E=1) = \mathbb{P}(\hat{k} \neq k)$ be the error probability, and define:
\begin{equation}
    \label{eq:fano_equation}
    f(t) = H(X|Y) + t \log t + (1-t) \log (1-t) - t \log(M-1).
\end{equation}
Since $t = \mathbb{P}(E=1)$, Fano's inequality (\emph{cf.} \autoref{eq:fano}) defines a constraint $f(t) \leq 0$ on $t$. In other words, the error probability of \emph{any} adversary must satisfy $f(t) \leq 0$, and thus we can upper bound advantage (or equivalently, lower bound error probability) by solving
\begin{equation}
    \label{eq:fano_problem}
    t^* = \min \{t \in [0,1]: f(t) \leq 0\},
\end{equation}
and the advantage bound is $\texttt{Adv} \leq (1 - t^* - p^*) / (1 - p^*)$.

\paragraph{Mutual information privacy.} In order to compute the advantage bound, we need to first evaluate the conditional entropy term $H(X|Y)$ in \autoref{eq:fano_equation}. An alternative expression for conditional entropy is
$$H(X|Y) = H(X) - I(X;Y),$$
where $I(X;Y)$ is the mutual information between $X$ and $Y$. Since $H(X)$ depends only on the sampling probability vector $\bp$, the only factor that the DP mechanism controls is $I(X;Y)$. Intuitively, $I(X;Y)$ measures the amount of information leaked about the data $\mathbf{z}_k$ through the trained model, and a DP learning algorithm can limit this leakage to reduce the advantage of a data reconstruction adversary. It is easy to show that using mutual information to quantify privacy satisfies key properties of DP such as adaptive composition and post-processing.

The connection between mutual information and differential privacy has been observed in prior work on mutual information DP (MI-DP; \citet{mir2013information, cuff2016differential, wang2016relation}), quantitative information flow~\cite{alvim2011relation}, and privacy funnel~\cite{makhdoumi2014information, salamatian2020privacy}. Some works also empirically control mutual information in order to reduce privacy leakage~\citep{bertran2019adversarially, wang2021improving}. Similar to our approach, \citet{salamatian2020privacy} also used Fano's inequality to derive a closed-form lower bound on the error rate $\mathbb{P}(\hat{k} \neq k)$, but the bound is too loose for practical purposes. In contrast, we give a \emph{numerical method} below for upper bounding the adversary's advantage that is much tighter in practice.

\paragraph{Computing advantage bound.}  The advantage bound is a function of $t^*$ in \autoref{eq:fano_problem}, which we can obtain by solving a one-dimensional minimization problem as follows. Assume that the mechanism $\mathcal{M}$ satisfies $I(X;Y) \leq \mu$ for some $\mu > 0$, and define:
\begin{equation}
    \label{eq:fano_equation_eps}
    f_\mu(t) = H(X) - \mu + t \log t + (1-t) \log(1-t) - t \log(M-1).
\end{equation}
Since $H(X|Y) = H(X) - I(X;Y) \geq H(X) - \mu$, we have that $f_\mu(t^*) \leq f(t^*) \leq 0$. Thus, we can instead minimize $t$ subject to $f_\mu(t) \leq 0$ to obtain a lower bound $t^* \leq \min \{t \in [0,1] : f_\mu(t) \leq 0\}$ for the error probability of \emph{any} data reconstruction adversary.
Taking derivative gives:
\begin{equation}
    f_\mu'(t) = \log t - \log(1-t) - \log(M-1).
\end{equation}
It is easy to see that $f_\mu'(t) \geq 0$ if and only if $t \geq 1 - 1/M$, so $f_\mu$ is decreasing for $t \in [0, 1-1/M)$ and increasing for $t \in (1-1/M, 1]$. Moreover, we know that $f_\mu(0) > 0$ if $\mu < H(X)$, and $f_\mu(1-1/M) = H(X) - \mu - \log(M) \leq 0$. Hence we can use binary search over the interval $[0,1-1/M]$ to find $\min \{t \in [0,1]: f_\mu(t) \leq 0\}$. Algorithm \ref{alg:advantage_bound} summarizes the computation in pseudo-code.

\begin{algorithm}[h]
\caption{Computing the advantage bound using Fano's inequality.}
\label{alg:advantage_bound}
\begin{algorithmic}[1]
\STATE \textbf{Input}: Upper bound $I(X;Y) \leq \mu$, number of distinct attributes $M$, sampling probability vector $\bp$.
\STATE $p^* \gets \max_m \bp_m$
\IF{$\mu \geq H(\bp)$}
\STATE \textbf{return} $1$
\ELSE
\STATE Define $f_\mu(t) = H(\bp) - \mu + t \log t + (1-t) \log(1-t) - t \log(M-1)$.
\STATE \textbf{assert} $f_\mu(0) > 0$ and $f_\mu(1-1/M) \leq 0$
\STATE Use binary search to find $t^* = \min \{t \in [0,1-1/M]: f_\mu(t) \leq 0\}$.
\STATE \textbf{return} $(1-t^*-p^*) / (1-p^*)$
\ENDIF
\end{algorithmic}
\end{algorithm}

\section{Advantage Bounds for DP Mechanisms}
\label{sec:bounds}

The advantage bound in \autoref{sec:fano} depends crucially on the upper bound $I(X;Y) \leq \mu$ for the mutual information between the private attribute and the trained model. We first present a general bound when the mechanism is $(\alpha,\epsilon)$-RDP. For commonly used DP primitives such as randomized response and Gaussian mechanism, we show that $I(X;Y)$ can be exactly computed or estimated, leading to even tighter advantage bounds in practice.

\subsection{Bound from RDP}

The general bound for $(\alpha,\epsilon)$-RDP mechanisms involves a generalization of mutual information known as \emph{Arimoto information}~\citep{arimoto1977information, verdu2015alpha}, denoted $I_\alpha(X;Y)$.
Notably, Arimoto information has natural connections to the order-$\alpha$ R\'{e}nyi divergence used to define RDP, and coincides with mutual information when $\alpha = 1$. \autoref{thm:renyi_bound} below shows that if the private mechanism $\calM$ is $(\alpha,\epsilon)$-RDP then $I_\alpha(X;Y) \leq \epsilon$. The special case of $\alpha=1$ corresponds to $I(X;Y) = I_\alpha(X;Y) \leq \epsilon$, which we can use in Algorithm \ref{alg:advantage_bound} to derive an advantage bound. Proof is given in \autoref{sec:proofs}.

\begin{theorem}
\label{thm:renyi_bound}
For any $\alpha \geq 1$, if $\mathcal{M}$ is $(\alpha,\epsilon)$-RDP then $I_\alpha(X;Y) \leq \epsilon$.
\end{theorem}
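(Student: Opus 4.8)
The plan is to unpack both sides of the claimed inequality $I_\alpha(X;Y) \leq \epsilon$ in terms of their defining expressions and show that the RDP hypothesis controls the relevant quantity pointwise in $X$. Recall that Arimoto information can be written (see \autoref{eq:arimoto}) in terms of the conditional R\'{e}nyi divergences $D_\alpha(P_{Y|X=k} \,\|\, P_Y)$ where $P_Y$ is the marginal of $Y$. The RDP hypothesis, however, bounds $D_\alpha(\mathcal{M}(\mathcal{D} \cup \{\bz_k\}) \,\|\, \mathcal{M}(\mathcal{D} \cup \{\bz_{k'}\})) \leq \epsilon$ for every pair $k, k'$, i.e.\ it bounds divergences between two conditionals, not between a conditional and the marginal. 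So the first key step is to relate these: I would use the fact that $P_Y = \sum_{k'} \bp_{k'} P_{Y|X=k'}$ is a mixture of the conditionals, together with the quasi-convexity (or joint convexity-type properties) of R\'{e}nyi divergence in its second argument, to conclude that $D_\alpha(P_{Y|X=k} \,\|\, P_Y) \leq \sup_{k'} D_\alpha(P_{Y|X=k} \,\|\, P_{Y|X=k'}) \leq \epsilon$ for each $k$.

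The second key step is to assemble these per-$k$ bounds into a bound on $I_\alpha(X;Y)$ itself. Here I would rely on the variational/averaging representation of Arimoto information as a suitable $\alpha$-weighted aggregate of the quantities $\exp\big((\alpha-1) D_\alpha(P_{Y|X=k}\,\|\,Q)\big)$ optimized over $Q$ (or directly over the marginal), and note that since each term in the aggregate is bounded by $\exp((\alpha-1)\epsilon)$, monotonicity of the aggregation map gives $I_\alpha(X;Y) \leq \epsilon$. The case $\alpha = 1$ follows either by taking the limit $\alpha \to 1$ (Arimoto information converges to Shannon mutual information, and R\'{e}nyi divergence converges to KL divergence, so $(1,\epsilon)$-RDP is just $\epsilon$-MI-DP-style KL control and the same mixture argument via convexity of KL applies) or by checking the $\alpha=1$ case directly: $I(X;Y) = \mathbb{E}_{k}\big[D_{\mathrm{KL}}(P_{Y|X=k}\,\|\,P_Y)\big] \leq \mathbb{E}_k \sup_{k'} D_{\mathrm{KL}}(P_{Y|X=k}\,\|\,P_{Y|X=k'}) \leq \epsilon$.

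The main obstacle I anticipate is the first step --- carefully justifying that the divergence to the \emph{mixture} marginal is dominated by the worst pairwise divergence. R\'{e}nyi divergence is not jointly convex for all $\alpha$, and its behavior under mixtures in the second argument needs the right lemma: for $\alpha > 1$ one can use that $P \mapsto \exp((\alpha-1)D_\alpha(P\,\|\,Q))$ is convex in $Q$ appropriately, or invoke the known result that $D_\alpha(P\,\|\,\cdot)$ is quasi-convex, so that $D_\alpha(P\,\|\,\sum_{k'}\bp_{k'}Q_{k'}) \leq \max_{k'} D_\alpha(P\,\|\,Q_{k'})$. Pinning down the exact form of this inequality that holds for all $\alpha \geq 1$ --- and confirming it is the version of Arimoto information in \autoref{eq:arimoto} that makes the aggregation step go through cleanly --- is where the real care is needed; the rest is bookkeeping. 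A secondary minor point is handling the degenerate cases where $\bp$ has zero entries (those $k$ simply drop out) and where the marginal $P_Y$ puts zero mass somewhere, which is automatically avoided since $P_Y$ dominates each $P_{Y|X=k}$ with $\bp_k > 0$.
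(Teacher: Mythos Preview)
Your proposal is correct and follows essentially the same route as the paper: for $\alpha=1$ you use convexity of KL in its second argument against the mixture marginal, and for $\alpha>1$ you combine quasi-convexity of $D_\alpha(\cdot\,\|\,\cdot)$ in the second argument (to pass from the pairwise RDP bound to $D_\alpha(P_{Y|X=k}\,\|\,P_Y)\leq\epsilon$) with an aggregation step to control $I_\alpha$. The paper makes your ``aggregation'' step concrete by pulling out $P_Y(y)$ from \autoref{eq:arimoto}, applying Jensen via convexity of $z\mapsto z^\alpha$ to collapse the $1/\alpha$-power, and then reading off $I_\alpha(X;Y)\leq \max_x D_\alpha(P_{Y|X=x}\,\|\,P_Y)$; this is exactly the monotone-aggregate inequality you gestured at, so your plan would go through once you write that step out.
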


\paragraph{Generalized Fano's inequality.} For $(\alpha,\epsilon)$-RDP mechanisms with $\alpha > 1$, one can apply \autoref{thm:renyi_bound} by simply leveraging the monotonicity of R\'{e}nyi divergence with respect to the order $\alpha$. That is, if $\calM$ is $(\alpha,\epsilon)$-RDP then it is also $(1,\epsilon)$-RDP, so $I(X;Y) \leq \epsilon$. However, this will give a loose advantage bound since the full range of $\alpha$ is not being effectively utilized. Fortunately, Fano's inequality can be generalized accordingly using Arimoto information for any order $\alpha$, which enables us to generalize the advantage bound in \autoref{sec:fano} with improved tightness. We give details for this generalized analysis in \autoref{sec:method_details}.

\begin{figure}[t]
\centering
\includegraphics[width=\linewidth]{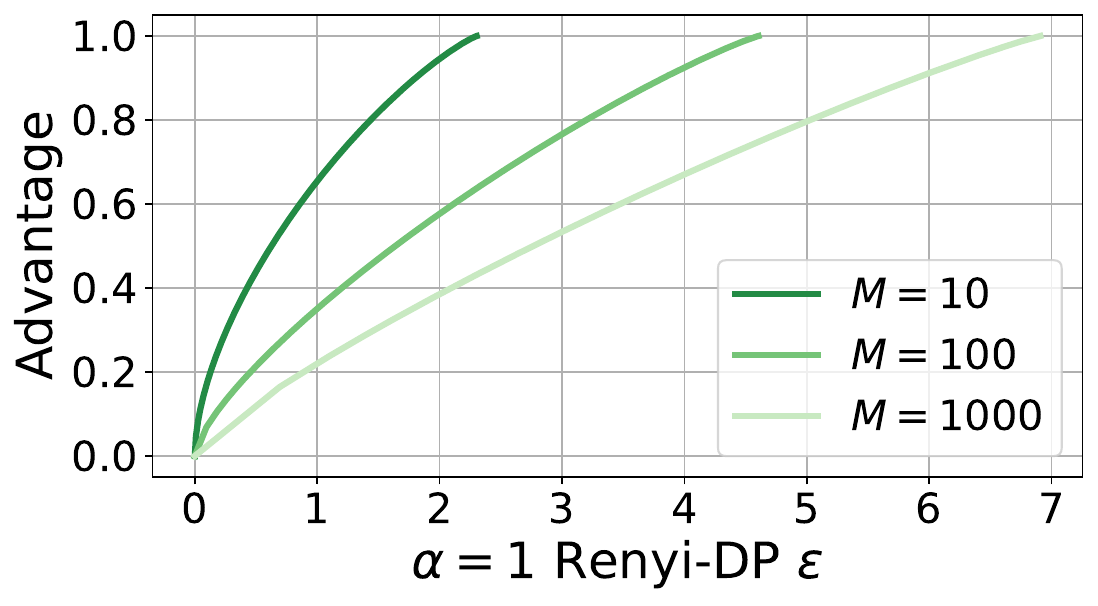}
\caption{Reconstruction advantage vs. $\alpha=1$ R\'{e}nyi DP parameter $\epsilon$ for different values of $M$. As the number of possibilities $M$ for the private attribute increases, the advantage curve becomes flatter, suggesting that a high $\epsilon$ still gives strong protection against data reconstruction.}
\label{fig:advantage_vs_eps}
\end{figure}

\paragraph{Example.} We showcase the power of the Fano's inequality analysis in combination with \autoref{thm:renyi_bound} in the following example. Here, we consider $M$ training samples $\bz_1,\ldots,\bz_M$ with uniform sampling probability $\bp_m = 1/M$ for $m=1,\ldots,M$, hence $H(X) = \log M$. For a $(1,\epsilon)$-RDP learning algorithm $\mathcal{M}$, we can use \autoref{thm:renyi_bound} to upper bound $I(X;Y)$ and use Algorithm \ref{alg:advantage_bound} to compute $t^* = \mathbb{P}(E=1)$ and the advantage bound accordingly. \autoref{fig:advantage_vs_eps} shows the advantage bound as a function of $\epsilon$ for different values of $M$. As expected, all curves are monotonically increasing in $\epsilon$, \emph{i.e.}, more privacy leakage results in higher advantage for the adversary.

More interestingly, for the same value of $\epsilon$, having a higher $M$ reduces advantage, which reflects the fact that there is more entropy in the private attribute and thus the adversary's success probability decreases with $M$. In fact, it is easy to show that when $M$ is uniformly distributed, Fano's inequality simplifies to:
\begin{align*}
&\hspace{3ex} H(X) - I(X;Y) \leq H(E) + \mathbb{P}(E=1) \log(M-1) \\
&\Leftrightarrow I(X;Y) \geq (1 - \mathbb{P}(E=1)) \log M - \log 2.
\end{align*}
Thus, for a given $c > 0$, the adversary's advantage at $\epsilon \approx c \log M$ is the same for all values of $M$, suggesting that $\epsilon$ should scale with $\log M$ to target a specific level of privacy risk. This theoretical analysis confirms the empirical observation that even a large $\epsilon$ confers non-trivial protection against data reconstruction attacks~\citep{carlini2019secret}. For instance, the membership inference advantage at $\epsilon=1$ is the same as the data reconstruction advantage at $\epsilon = \log M \approx 23$ when the private attribute is uniformly randomly sampled from a set of size $M=10^{10}$.

\begin{figure*}[t]
    \centering
    \begin{subfigure}{.49\textwidth}
      \centering
      \includegraphics[width=.51\linewidth]{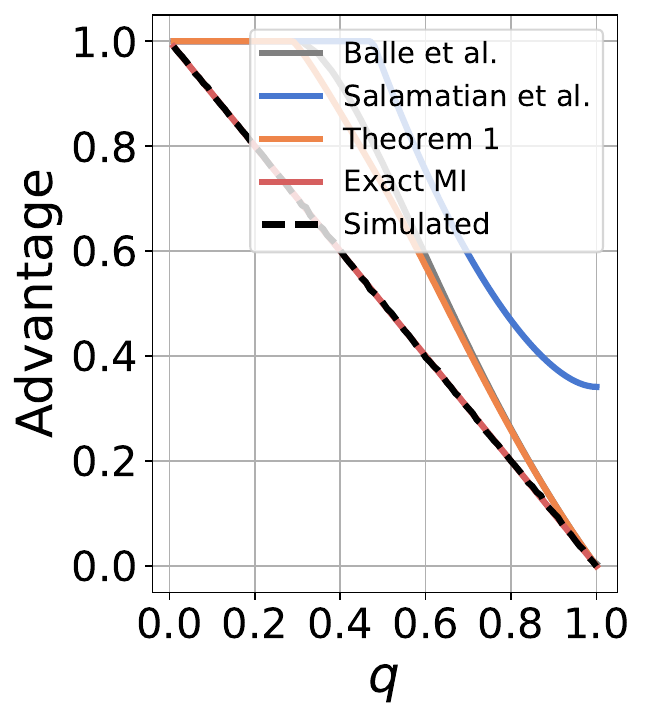}
      \includegraphics[width=.48\linewidth]{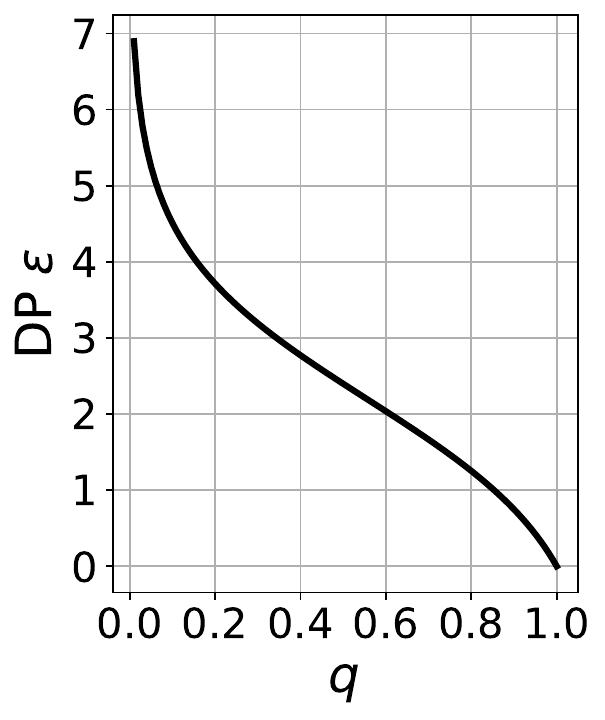}
      \caption{Randomized response}
      \label{fig:synthetic_rr}
    \end{subfigure}
    \begin{subfigure}{.49\textwidth}
      \centering
      \includegraphics[width=.50\linewidth]{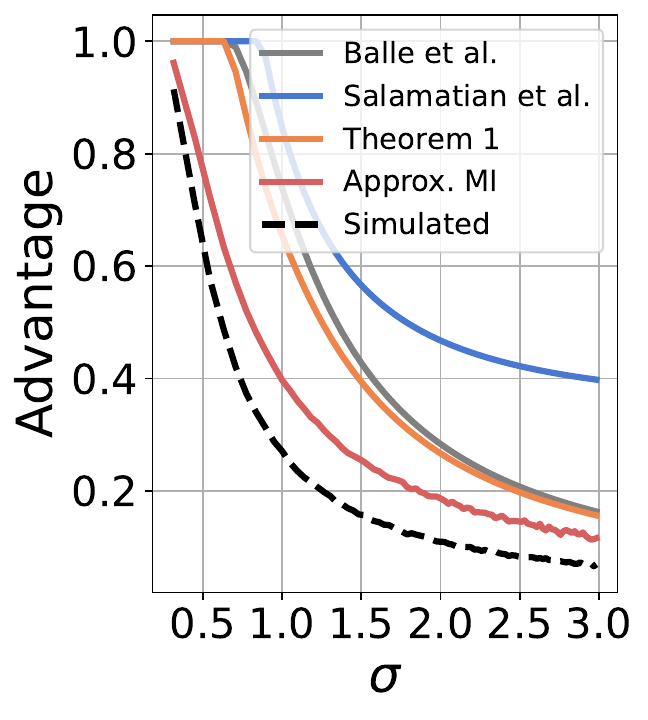} 
      \includegraphics[width=.49\linewidth]{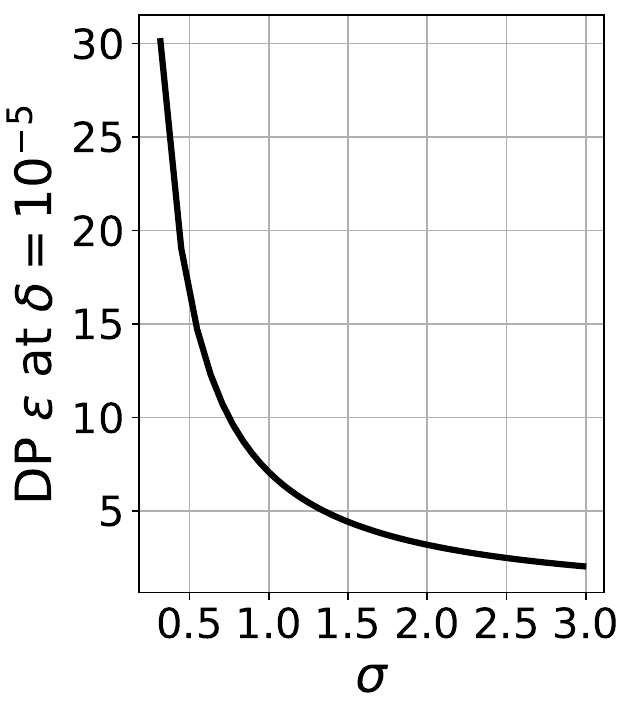}
      \caption{Gaussian mechanism}
      \label{fig:synthetic_gaussian}
    \end{subfigure} 
\caption{Advantage plot for randomized response and the Gaussian mechanism. The advantage bound using Fano's inequality and exact/approximate mutual information closely matches the empirical lower bound. The advantage can be much lower than what the corresponding DP $\epsilon$ suggests in terms of privacy leakage.}
\label{fig:synthetic}
\end{figure*}

\subsection{Exact Computation of $I(X;Y)$}
\label{sec:exact_mi}

For certain DP primitives, the mutual information $I(X;Y)$ can be computed either exactly or estimated numerically.

\paragraph{Randomized response.} The (generalized) randomized response (RR) mechanism is defined as follows:
\begin{equation}
    \label{eq:randomized_response}
    Y|X =
    \begin{cases}
    X & \text{w.p. } 1-q, \\
    \text{Unif}(\{1,\ldots,M\}) & \text{w.p. } q
    \end{cases}
\end{equation}
where $q \in [0,1]$.
To derive $I(X;Y)$, first note that $\mathbb{P}(X=x, Y=y) = P(X=x) P(Y=y | X=x)$ and $\mathbb{P}(Y=y) = \sum_x \mathbb{P}(X=x, Y=y)$ can be easily computed given $q$, $M$ and the sampling probability vector $\bp$. We can then numerically compute:
\begin{equation}
    \label{eq:randomized_response_mi}
    \resizebox{\hsize}{!}{$
    I(X;Y) = \sum_x \sum_y \mathbb{P}(X=x, Y=y) \log \frac{\mathbb{P}(X=x, Y=y)}{\mathbb{P}(X=x) \mathbb{P}(Y=y)}.
    $}
\end{equation}


\paragraph{Gaussian mechanism.} The Gaussian mechanism is a common primitive in the design of more complex private mechanisms such as DP-SGD~\citep{abadi2016deep}. Here, the mechanism first maps each private record $\bz_m$ to some encoding $\be_m \in \mathbb{R}^d$ and outputs:
\begin{equation}
    \label{eq:gaussian_mechanism}
    (Y|X=m) = \be_m + \mathcal{N}(0, \sigma^2 \mathbf{I}_{d \times d}).
\end{equation}
As a result, $Y$ is a Gaussian mixture with mixture probabilities $\bp$.
For example, in output perturbation~\citep{chaudhuri2011differentially}, $\be_m$ is the model obtained when training a convex model on $\mathcal{D} \cup \{\bz_m\}$; in DP-SGD, $\be_m$ is the gradient of the training loss for sample $\bz_m$. For our purpose, it is equivalent to treat $\be_m$ as the private record instead of $\bz_m$ since the mapping can be easily inverted~\citep{zhu2019deep}.


One can show that (see \autoref{eq:mutual_information} in \autoref{sec:proofs}):
\begin{equation*}
    I(X;Y) = \sum\nolimits_x \mathbb{P}(X=x) D_1((Y|X=x) || Y),
\end{equation*}
where $D_1$ denotes KL divergence. For the Gaussian mechanism, each term in the sum is the KL divergence between a Gaussian distribution and a mixture of Gaussians. Although this quantity is difficult to bound in closed form, it can be approximated via Monte-Carlo simulation by drawing $k \sim \text{Categorical}(\mathbf{p})$ and computing the KL divergence $D_1((Y|X=k) || Y)$. Taking expectation over multiple draws gives an unbiased estimate of $I(X;Y)$.

\section{Experiment}
\label{sec:experiment}

We perform a series of experiments to show that our Fano's inequality bound provides meaningful semantic guarantees against attribute inference and data reconstruction attacks.

\subsection{Synthetic Data Experiments}
\label{sec:synthetic}

We first experiment on synthetic data to numerically evaluate our Fano's inequality bound for both randomized response and the Gaussian mechanism.

\paragraph{Data generation.} We generate data with $M=10$ possible values sampled uniformly randomly, \emph{i.e.}, $\bp_m = 1/M$ for $m=1,\ldots,M$. For randomized response, we sample the output value $Y$ according to \autoref{eq:randomized_response} with different values of $q \in [0,1]$. For the Gaussian mechanism, we encode the data as one-hot vectors $\be_1,\ldots,\be_M$ and output $Y$ according to \autoref{eq:gaussian_mechanism} with $\sigma \in (0,3]$.

\paragraph{Advantage bounds.} We evaluate our Fano's inequality bound using both the general mutual information bound from RDP (\autoref{thm:renyi_bound}) as well as either exact or approximate computation in \autoref{sec:exact_mi}. As baselines, we evaluate the Fano's inequality bound from \citet{salamatian2020privacy} and the $(\eta,\gamma)$-ReRo bound from \citet{balle2022reconstructing}, which upper bounds the adversary's success probability as: $\mathbb{P}(E=0) \leq (e^\epsilon / M)^{\frac{\alpha-1}{\alpha}}$ if $\calM$ is $(\alpha,\epsilon)$-RDP. We minimize this upper bound over $\alpha$ and convert it to an advantage bound for comparison.

\paragraph{Empirical lower bound.} To compute an empirical lower bound for the advantage, we simulate a data reconstruction adversary using the \emph{maximum a posteriori} (MAP) estimator $\hat{X}(Y)$, which returns the most likely value for $X$ given $Y$ using Bayes' rule. Each experiment then consists of:

\textbf{1.} Sampling $X$ from $1,\ldots,M$ uniformly; \textbf{2.} Using either randomized response or the Gaussian mechanism to sample $Y$; \textbf{3.} Computing the MAP estimate $\hat{X}(Y)$ and comparing it to $X$. We repeat this experiment $100,000$ times to compute an empirical lower bound for the advantage.

\paragraph{Result.} \autoref{fig:synthetic} plots advantage as a function of the sampling probability $q$ for randomized response and noise standard deviation $\sigma$ for the Gaussian mechanism. For randomized response (\autoref{fig:synthetic_rr}), a higher $q$ increases privacy and is reflected in the empirical lower bound (dashed line) as the adversary's advantage decreases to $0$ when $q=1$. The corresponding DP $\epsilon$ for different values of $q$ are shown in the right plot.
The bound from \autoref{thm:renyi_bound} is consistently lower than both baseline bounds, although all three bounds are relatively loose compared to the empirical lower bound. In contrast, using Fano's inequality with exact mutual information gives a \emph{tight} advantage upper bound that matches the empirical lower bound. This is to be expected as it can be shown that the MAP estimator attains equality in Fano's inequality under the uniform input distribution; see \autoref{sec:method_details}.

For the Gaussian mechanism (\autoref{fig:synthetic_gaussian}), a higher $\sigma$ introduces more randomness and increases privacy, and thus both the empirical lower bound as well as the upper bounds decrease as $\sigma$ increases. Similar to the randomized response result, the baseline bounds are strictly dominated by \autoref{thm:renyi_bound}. Using Monte-Carlo to approximate mutual information gives the tightest advantage bound that closely matches the behavior of the empirical lower bound. These results demonstrate the power and tightness of the Fano's inequality analysis, and suggests that tight mutual information privacy accounting can grant strong semantic privacy guarantees for relatively low amounts of DP noise.

\begin{figure*}[t]
    \centering
    \begin{subfigure}{.32\textwidth}
      \centering
      \includegraphics[width=\linewidth]{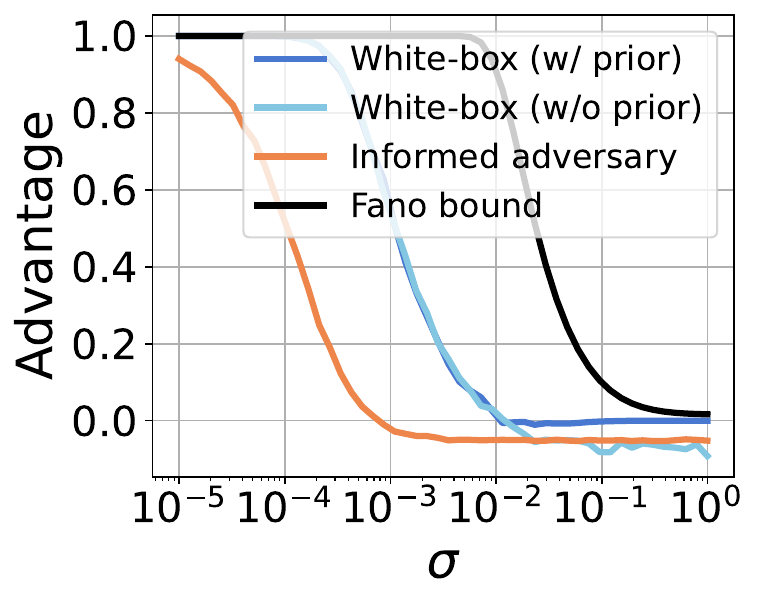}  
      \vspace{-3ex}
      \caption{Advantage vs. $\sigma$}
      \label{fig:attribute_advantage}
    \end{subfigure}
    \begin{subfigure}{.32\textwidth}
      \centering
      \includegraphics[width=\linewidth]{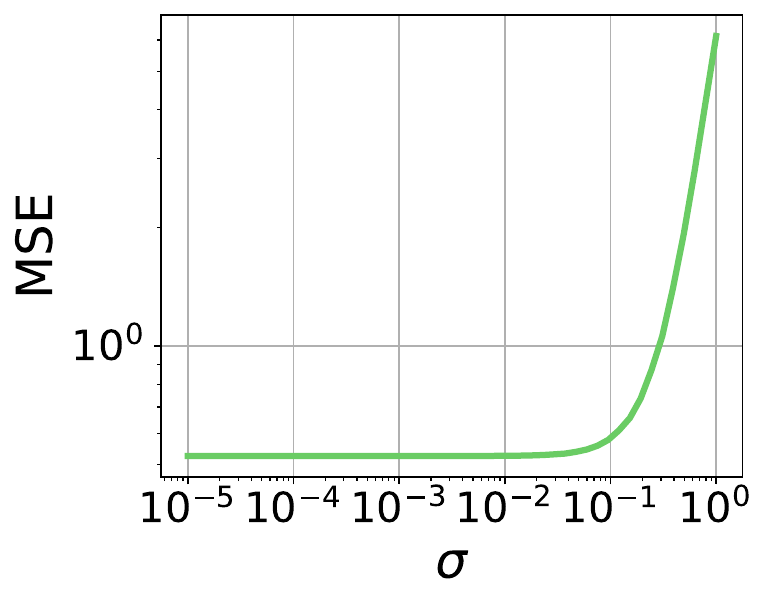}  
      \vspace{-3ex}
      \caption{Test MSE vs. $\sigma$}
      \label{fig:attribute_accuracy}
    \end{subfigure} 
    \begin{subfigure}{.33\textwidth}
      \centering
      \includegraphics[width=\linewidth]{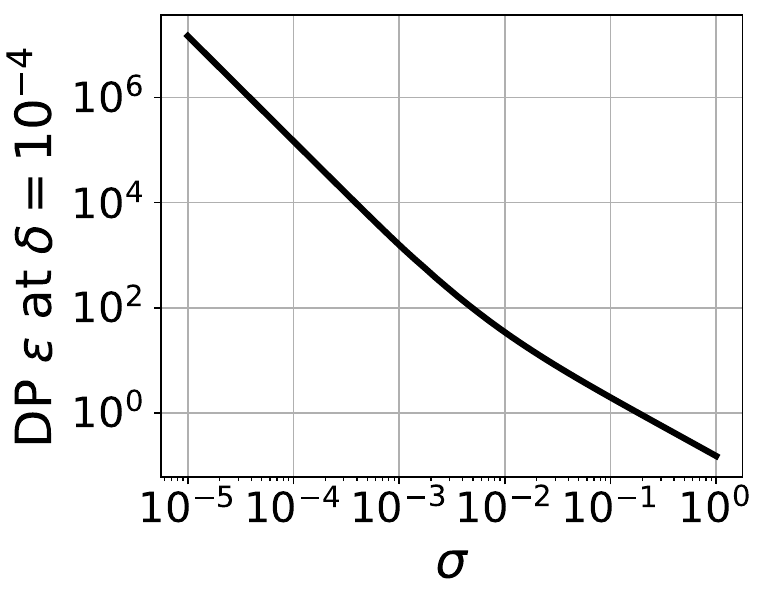}  
      \vspace{-3ex}
      \caption{DP $\epsilon$ vs. $\sigma$}
      \label{fig:attribute_epsilon}
    \end{subfigure} 
\caption{Plots for the IWPC attribute inference attack experiment. The model is trained using output perturbation by adding Gaussian noise with standard deviation $\sigma$ to the loss minimizer. At $\sigma=0.1$, advantage (left) is relatively low when measured according to both the Fano bound and the empirical attacks, while the model's test MSE (middle) increases only moderately. In comparison, the DP $\epsilon$ (right) remains relatively high, overestimating privacy leakage.} 
\label{fig:attribute_inference}
\end{figure*}

\subsection{Attribute Inference for Pharmacogenetics Modeling}
\label{sec:attribute_inference}

Next, we evaluate our advantage bound for privately trained pharmacogenetics models studied in \citet{fredrikson2014privacy} and show that it gives meaningful protection against attribute inference attacks.

\paragraph{Dataset and model.} The IWPC dataset~\citep{international2009estimation} contains data for clinical trial subjects, with the goal of training an ML model to predict the stable dosage of warfarin given the subjects' attributes. One particularly privacy-sensitive attribute is the VKORC1 gene type, which can be one of three values: \texttt{CC}, \texttt{CT} or \texttt{TT}. We train an $L_2$-regularized linear regression model on the attributes to predict the warfarin dosage (\emph{i.e.}, target label) according to \cite{hannun2021measuring}.

\paragraph{DP mechanism.} We employ the output perturbation mechanism~\citep{chaudhuri2011differentially}, which adds Gaussian noise to the model parameters to guarantee differential privacy. For each sample $j$, let $\calD_{-j}$ be the data subset containing all but the $j$-th sample $\bz^{(j)}$. For the privacy analysis, we train three different models on datasets $\calD_{-j} \cup \{\bz_{m}^{(j)}\}$ for $m=1,2,3$, where $\bz_m^{(j)}$ represents the $j$-th sample with its VKORC1 gene set to one of the three values \texttt{CC}, \texttt{CT} or \texttt{TT}. We can then treat the three models' parameters as encodings of the private gene types under our data reconstruction game formulation, and the output perturbation mechanism is equivalent to the Gaussian mechanism applied to these encodings. To compute the advantage bound, we use a specialized mutual information bound in \autoref{thm:mutual_info_gaussian} in \autoref{sec:proofs} instead of the generic RDP bound in \autoref{thm:renyi_bound}.

\paragraph{Attacks.} We consider attribute inference attacks for inferring the VKORC1 gene type from the trained model. Following \citet{hannun2021measuring}, we implement a white-box attack with full knowledge of the data subset $\calD_{-j}$ and all attributes of sample $\bz^{(j)}$ except for the VKORC1 gene type. The attack essentially implements a maximum likelihood estimator, and we augment it with prior knowledge of the marginal distribution of the three gene types to produce a Bayesian variant. We also implement the Informed Adversary attack from \citet{balle2022reconstructing}, which only has access to $\calD_{-j}$ for predicting the VKORC1 gene type of $\bz^{(j)}$. Details for the attacks are given in \autoref{sec:method_details}.

\paragraph{Result.} \autoref{fig:attribute_advantage} shows the adversary's advantage as a function of $\sigma$, which shows a decreasing trend as expected. The white-box attack with prior knowledge of the marginal VKORC1 gene distribution (dark blue) attains the highest advantage. Notably, its advantage converges to 0 as $\sigma \rightarrow \infty$, whereas the white-box attack without prior (light blue) and the Informed Adversary attack (orange) have negative advantage as $\sigma \rightarrow \infty$. This is because the VKORC1 gene's marginal distribution is non-uniform, with probabilities $0.367, 0.339, 0.294$ for \texttt{CC}, \texttt{CT} and \texttt{TT}, respectively. Since advantage as defined in \autoref{eq:advantage} uses $p^* = 0.367$ (corresponding to always predicting \texttt{CC}) as the baseline, it is sub-optimal to predict the uniform distribution when the model contains close to no information about $\bz^{(j)}$. 

The advantage bound using Fano's inequality (black line) strictly dominates the advantage of the three attacks. In particular, at $\sigma=0.1$ the advantage bound predicts close to $0.1$ advantage, which can be considered relatively low risk. Moreover, the test mean squared error (MSE) in \autoref{fig:attribute_accuracy} shows that model utility remains satisfactory when $\sigma=0.1$. At the same time, the corresponding DP $\epsilon$ in \autoref{fig:attribute_epsilon} paints a pessimistic picture, with $\epsilon \approx 2$ at $\sigma=0.1$. Our analysis suggests that $\sigma=0.1$ can be a reasonable value for the output perturbation mechanism applied to this dataset for optimal privacy-utility trade-off. 

\subsection{Language Model Training with Canaries}
\label{sec:language_model}

Finally, we consider a language model canary extraction attack similar to the one performed in \citet{carlini2019secret}.

\paragraph{Setup.}
We experiment using a pre-trained GPT-2~\citep{radford2019language} model---a causal language model for predicting the next token given a context token sequence. The vocabulary consists of $M=50,256$ tokens.
We fine-tune the model using DP-SGD~\citep{abadi2016deep} on a single \emph{canary sequence} of the form ``\texttt{John Smith's credit card number is X}", with \texttt{X} being a token chosen uniformly randomly from the vocabulary.
For the DP-SGD hyperparameters, we use a clipping norm of $C=1$ and vary the number of fine-tuning steps $T$ and the noise multiplier $\sigma$. For any $\alpha \geq 1$, one can show that DP-SGD is $(\alpha,\epsilon)$-RDP with $\epsilon=\alpha T / 2 \sigma^2$, and then use Fano's inequality with \autoref{thm:renyi_bound} to obtain the advantage upper bound.

\paragraph{Attack.}
Since the model is fine-tuned to predict a single secret token \texttt{X} in the canary sequence, we can perform an extraction attack that compares the log-likelihood\footnote{We also experimented using the likelihood and found that the attack performs significantly worse.} of every token \texttt{X} before and after fine-tuning. The adversary then predicts the token with the largest difference. Note that such a differencing attack has been used in state-of-the-art membership inference attacks such as \citet{ye2021enhanced, watson2021importance, carlini2022membership}. Thus, each experiment run consists of sampling \texttt{X} uniformly from $M=50,256$ tokens, fine-tuning the GPT-2 model using DP-SGD on the canary sequence, and then running the extraction attack to recover \texttt{X}. We perform this experiment $1000$ times to compute the adversary's advantage empirically.

\begin{figure}[t]
    \centering
    \includegraphics[width=\linewidth]{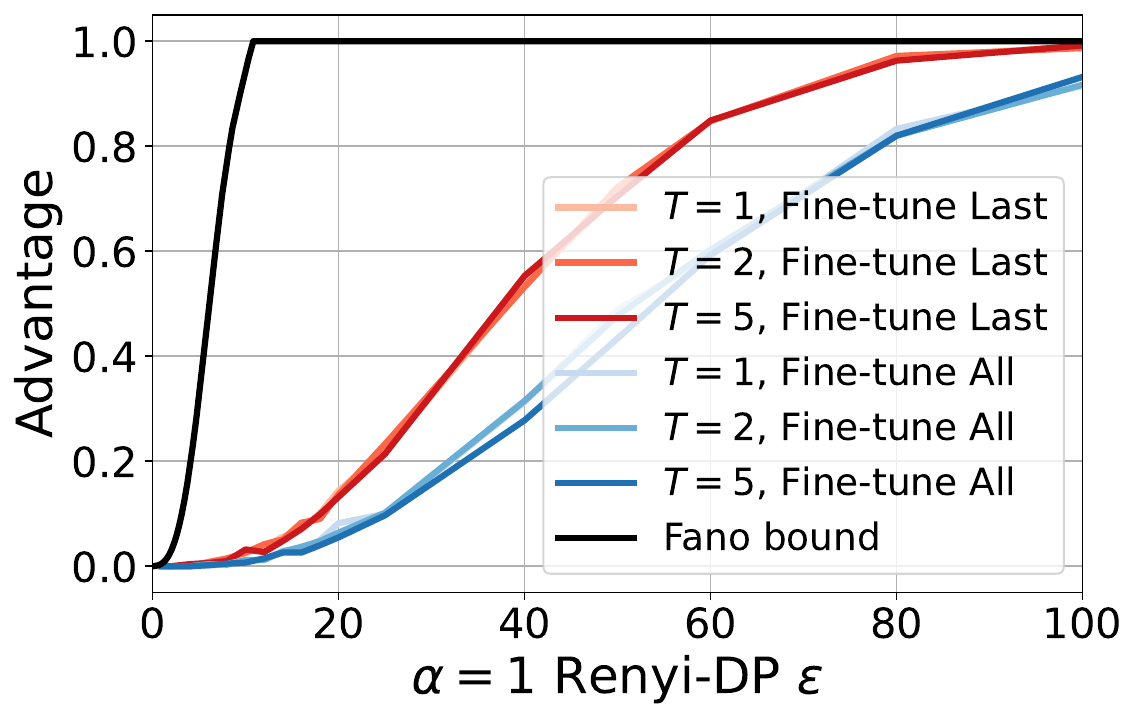}
    \caption{Result for the language model canary extraction attack. The model is fine-tuned on the canary sequence for $T$ steps using DP-SGD.
    See text for details.}
\label{fig:nlp}
\end{figure}


\paragraph{Result.} \autoref{fig:nlp} shows the adversary's advantage as a function of the $\alpha=1$ RDP parameter $\epsilon=2T/\sigma^2$, where $T \in \{1,2,5\}$ is the number of fine-tuning steps. We include two variants of the fine-tuning method, either fine-tune only the last layer (red) or fine-tune all layers (blue). As expected, the adversary's advantage increases monotonically as $\epsilon$ increases, while the advantage bound using Fano's inequality strictly dominates the advantage of simulated extraction attacks. Although the advantage bound overestimates the empirically computed advantage, it can still serve as a useful guideline for selecting the privacy parameter $\epsilon$. For instance, at $\epsilon=6$, the advantage upper bound is close to $0.5$, which is certainly a non-trivial level of privacy protection for a relatively large value of $\epsilon$. We suspect that the large gap is a result of upper bounding mutual information using the generic RDP bound in \autoref{thm:renyi_bound} and is an inherent limitation of using RDP accounting. A closer privacy analysis of the Gaussian mechanism directly in terms of mutual information may be able to close this gap significantly.
\section{Conclusion}
\label{sec:conclusion}

We presented a rigorous formulation of data reconstruction attacks and analyzed the privacy leakage of DP mechanisms using Fano's inequality. Our analysis gives a numerical method for upper bounding the advantage of a reconstruction adversary, which we show empirically can be a strong indicator for the mechanism's actual privacy protection against privacy attacks. Consequently, we advocate for the use of our bounds both for interpreting the DP parameter $\epsilon$ and as a guideline for selecting it in practice. 


\bibliography{main}
\bibliographystyle{icml2023}

\newpage
\appendix
\onecolumn
\setcounter{theorem}{0}
\section{Method Details}
\label{sec:method_details}

\subsection{Generalized Fano's Inequality Analysis}
\label{sec:generalized_fano}

We derive a generalized version of the Fano's inequality analysis in \autoref{sec:fano} using Arimoto information. We first define a few key quantities in information theory below.

\paragraph{Measures of information.} In information theory, \emph{entropy} measures the amount of information contained in a random variable. For a discrete random variable $X$, the entropy of $X$ is defined as
\begin{equation}
    \label{eq:entropy}
    H(X) = -\sum_x \mathbb{P}(X=x) \log \mathbb{P}(X=x).
\end{equation}
A closely related notion is \emph{conditional entropy} $H(X|Y)$. For two random variables $X$ and $Y$, $H(X|Y)$ measures the amount of information contained in $X$ after observing $Y$, and is defined as
\begin{equation}
    \label{eq:cond_entropy}
    H(X|Y) = \sum_y \mathbb{P}(Y=y) H(X|Y=y),
\end{equation}
where $H(X|Y=y)$ denotes entropy of the conditional distribution $X|Y=y$.
In essence, if $X$ and $Y$ are highly dependent then $H(X|Y) \approx 0$, which suggests that there is very little entropy left in $X$ after observing $Y$. On the other hand, if $X$ and $Y$ are independent then $H(X|Y) = H(X)$, meaning that observing $Y$ gives no additional information about $Y$. Thus, the difference
\begin{equation}
    \label{eq:mutual_info}
    I(X;Y) := H(X) - H(X|Y),
\end{equation}
known as \emph{mutual information}, is a non-negative quantity that measures the amount of dependence between $X$ and $Y$.

\paragraph{Generalization using R\'{e}nyi entropy.} The three measures of information defined above can be generalized. We start from the generalization of entropy to \emph{R\'{e}nyi entropy of order $\alpha$}~\citep{renyi1961measures}, denoted $H_\alpha$, given by:
\begin{equation}
    \label{eq:renyi_entropy}
    H_\alpha(X) = \frac{1}{1-\alpha} \log \sum_x \mathbb{P}(X=x)^\alpha
\end{equation}
for $\alpha > 1$. Notably, it can be shown that this definition recovers the standard entropy $H(X)$ when $\alpha \rightarrow 1$. The corresponding generalization of conditional entropy is the so-called \emph{Arimoto-R\'{e}nyi conditional entropy}~\citep{arimoto1977information}, defined as
\begin{equation}
    \label{eq:renyi_cond_entropy}
    H_\alpha(X|Y) = \frac{\alpha}{1-\alpha} \log \mathbb{P}(Y=y) \exp \left( \frac{1-\alpha}{\alpha} H_\alpha(X|Y=y) \right).
\end{equation}
The generalization of mutual information $I_\alpha(X;Y)$, known as \emph{Arimoto information}~\citep{verdu2015alpha}, is the difference $H_\alpha(X) - H_\alpha(X|Y)$, and has the following form:
\begin{equation}
    \label{eq:arimoto}
    I_\alpha(X;Y) = \frac{\alpha}{\alpha-1} \log \sum_y \left( \sum_x \mathbb{P}(X_\alpha=x) \mathbb{P}(Y=y | X=x)^\alpha \right)^{1/\alpha},
\end{equation}
where $X_\alpha$ denotes the $\alpha$-scaled distribution with $\mathbb{P}(X_\alpha=x) \propto \mathbb{P}(X=x)^\alpha$. Similar to R\'{e}nyi entropy, Arimoto-R\'{e}nyi conditional entropy and Arimoto information both converge to the corresponding conditional entropy and mutual information as $\alpha \rightarrow 1$.

\paragraph{Generalized Fano's inequality.} Fano's inequality can be generalized using an analogue of conditional entropy known as \emph{Arimoto-R\'{e}nyi conditional entropy}~\citep{arimoto1977information}. The generalized version is given in \cite{sason2016f}, which states:
\begin{equation}
    \label{eq:generalized_fano}
    H_\alpha(X | Y) \leq \log M - D_\alpha(\text{Bernoulli}(t) || \text{Bernoulli}(1-1/M)),
\end{equation}
where $\text{Bernoulli}(t)$ denotes the Bernoulli distribution with success probability $t$.
It can be shown that $H_\alpha(X|Y) = H_\alpha(X) - I_\alpha(X;Y)$. $H_\alpha(X)$ can be computed directly using \autoref{eq:renyi_entropy}, while $I_\alpha(X;Y)$ can be upper bounded using \autoref{thm:renyi_bound}. Thus, we can lower bound $H_\alpha(X|Y)$ and establish an inequality for $t$ using the generalized Fano's inequality. We can then apply the binary search method in Algorithm \ref{alg:advantage_bound} to upper bound the advantage for a given $\alpha$. Algorithm \ref{alg:generalized_advantage_bound} details the computation in pseudo-code.

\begin{algorithm}[h]
\caption{Computing the advantage bound using generalized Fano's inequality.}
\label{alg:generalized_advantage_bound}
\begin{algorithmic}[1]
\STATE \textbf{Input}: $\alpha \geq 1$, upper bound $I_\alpha(X;Y) \leq \mu$, number of distinct attributes $M$, sampling probability vector $\bp$.
\STATE $p^* \gets \max_m \bp_m$
\IF{$\mu \geq H_\alpha(\bp)$}
\STATE \textbf{return} $1$
\ELSE
\STATE Define $f_\mu(t) = H_\alpha(\bp) - \mu - \log M + D_\alpha(\text{Bernoulli}(t) || \text{Bernoulli}(1-1/M))$.
\STATE \textbf{assert} $f_\mu(0) > 0$ and $f_\mu(1-1/M) \leq 0$
\STATE Use binary search to find $t^* = \min \{t \in [0,1-1/M]: f_\mu(t) \leq 0\}$.
\STATE \textbf{return} $(1-t^*-p^*) / (1-p^*)$
\ENDIF
\end{algorithmic}
\end{algorithm}

\subsection{Tightness of Fano's Inequality for Randomized Response}
\label{sec:rr_tightness}

Recall the generalized randomized response mechanism in \autoref{sec:exact_mi}. We show that the Fano's inequality analysis is tight when the input is sampled uniformly randomly. For any $Y=y$, we can derive the posterior distribution for $X$ using Bayes rule:
\begin{equation*}
    \mathbb{P}(X=x|Y=y) = \mathbb{P}(Y=y|X=x) \mathbb{P}(X=i) / \mathbb{P}(Y=y) =
    \begin{cases}
    1-q+q/M & \text{if } x=y,\\
    q/M & \text{otherwise.}
    \end{cases}
\end{equation*}
Suppose $q < 1$ so that $1-q+q/M > q/M$, then the MAP adversary's estimate is $\hat{X}(Y) = Y$, with error probability $\mathbb{P}(E=1) = q - q/M$. For any $y$, we can derive the conditional entropy $H(X|Y=y)$ in closed form:
\begin{align*}
    H(X|Y=y) &= \left( -(1-q+q/M) \log(1-q+q/M) - \sum_{x \neq y} \frac{q}{M} \log \frac{q}{M} \right) \\
    &= \left( -(1-q+q/M) \log(1-q+q/M) - (q-q/M) \log \frac{q(M-1)}{M(M-1)} \right) \\
    &= (-(1-q+q/M) \log(1-q+q/M) - (q-q/M) \log (q-q/M) + (q-q/M) \log (M-1)) \\
    &= H(E) + \mathbb{P}(E=1) \log(M-1).
\end{align*}
Multiplying both sides by $\mathbb{P}(Y=y)$ and taking sum over $y$ gives $H(X|Y) = H(E) + \mathbb{P}(E=1) \log(M-1)$, which attains equality in \autoref{eq:fano}.

\subsection{Attribute Inference Attack Details}
\label{sec:attribute_inference_details}

We give details for the white-box attribute inference attack in \autoref{sec:attribute_inference}. Fix sample $j$, and for $m=1,2,3$ let $\bw_m$ be the parameter vector of the model trained on $\calD_{-j} \cup \{\bz_m^{(j)}\}$. Suppose the output perturbation mechanism returns the parameter vector $\bw$. Since the mechanism obtained $\bw$ by perturbing one of $\bw_1,\bw_2,\bw_3$ with Gaussian noise, we can evaluate the likelihood of $\bw$ under all three choices and maximize it to determine $X$:
\begin{equation*}
    \log \mathbb{P}(X=m|Y=\bw) = -\|\bw - \bw_m \|_2^2 / 2 \sigma^2 + \text{constant}.
\end{equation*}
This is possible since the adversary has access to $\calD_{-j}$ and all attributes of $\bz^{(j)}$ except for the VKORC1 gene, and hence can enumerate all possible values for $\bz_i^{(j)}$ to obtain $\bw_1,\bw_2,\bw_3$. To derive the MAP estimator, we apply Bayes' rule:
\begin{align*}
    \mathbb{P}(X=m|Y=\bw) &\propto \mathbb{P}(Y=\bw|X=m) \mathbb{P}(X=m) = \mathbb{P}(Y=\bw|X=m) \bp_m \\
    \Rightarrow \log \mathbb{P}(X=m|Y=\bw) &= -\|\bw - \bw_m \|_2^2 / 2 \sigma^2 + \log \bp_m  + \text{constant},
\end{align*}
where $\bp = (0.367, 0.339, 0.294)$ is the marginal distribution of the VKORC1 gene.

\section{Proofs}
\label{sec:proofs}

\begin{theorem}
For any $\alpha \geq 1$, if $\mathcal{M}$ is $(\alpha,\epsilon)$-RDP then $I_\alpha(X;Y) \leq \epsilon$.
\end{theorem}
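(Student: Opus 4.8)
The plan is to exploit the variational structure of Arimoto information (\autoref{eq:arimoto})---which is Sibson's $\alpha$-mutual information evaluated at the $\alpha$-tilted input distribution---so that for \emph{any} reference output law $Q_Y$, $I_\alpha(X;Y)$ is dominated by a R\'enyi-type average of the per-hypothesis divergences $D_\alpha\big(\mathcal{M}(\mathcal{D}\cup\{\bz_x\})\,\|\,Q_Y\big)$. Choosing $Q_Y$ to be one of those conditional output distributions turns every such divergence into a divergence between neighboring datasets, hence at most $\epsilon$ by the definition of $(\alpha,\epsilon)$-RDP. I would present the argument for $\alpha>1$; the case $\alpha=1$ (ordinary mutual information, KL divergence) then follows by letting $\alpha\downarrow1$, since all quantities are continuous in $\alpha$, or from the classical identity $I(X;Y)=\min_{Q_Y}\sum_x \bbP(X=x)\,D_1\big(\mathcal{M}(\mathcal{D}\cup\{\bz_x\})\,\|\,Q_Y\big)$ with the same choice of $Q_Y$.

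First I would fix notation: let $P(x):=\bbP(X_\alpha=x)$ be the tilted input marginal, $P_x:=\mathcal{M}(\mathcal{D}\cup\{\bz_x\})$ the conditional output law (a property of the mechanism, independent of the input marginal), and $g(y):=\big(\sum_x P(x)\,P_x(y)^\alpha\big)^{1/\alpha}$. Then \autoref{eq:arimoto} reads $I_\alpha(X;Y)=\tfrac{\alpha}{\alpha-1}\log\sum_y g(y)$, equivalently $\exp\big((\alpha-1)I_\alpha(X;Y)\big)=\big(\sum_y g(y)\big)^\alpha$. (I take the discrete-output form; the continuous case replaces sums by integrals verbatim, and I restrict to the support where $P(x)>0$, which costs nothing.) The key lemma is that for any probability law $Q_Y$,
\begin{equation*}
  \exp\big((\alpha-1)\,I_\alpha(X;Y)\big)\ \le\ \sum_x P(x)\exp\big((\alpha-1)\,D_\alpha(P_x\,\|\,Q_Y)\big).
\end{equation*}
To prove it, interchanging the summation order and using the definition of R\'enyi divergence shows the right-hand side equals $\sum_y Q_Y(y)^{1-\alpha}\,g(y)^\alpha$, so the lemma reduces to $\big(\sum_y g(y)\big)^\alpha\le\sum_y Q_Y(y)^{1-\alpha}g(y)^\alpha$. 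This is H\"older's inequality applied to the factorization $g(y)=\big(Q_Y(y)^{(1-\alpha)/\alpha}g(y)\big)\cdot Q_Y(y)^{(\alpha-1)/\alpha}$ with conjugate exponents $\alpha$ and $\alpha/(\alpha-1)$ (both exceed $1$ since $\alpha>1$), together with $\sum_y Q_Y(y)=1$.

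Finally I would instantiate the lemma with $Q_Y=P_l=\mathcal{M}(\mathcal{D}\cup\{\bz_l\})$ for an arbitrary fixed $l$. For each $k$, the datasets $\mathcal{D}\cup\{\bz_k\}$ and $\mathcal{D}\cup\{\bz_l\}$ differ in a single record, so $(\alpha,\epsilon)$-RDP gives $D_\alpha(P_k\,\|\,P_l)\le\epsilon$ (and $=0$ for $k=l$). Substituting and using $\sum_x P(x)=1$ yields
\begin{equation*}
  \exp\big((\alpha-1)\,I_\alpha(X;Y)\big)\ \le\ \sum_x P(x)\,e^{(\alpha-1)\epsilon}\ =\ e^{(\alpha-1)\epsilon},
\end{equation*}
and since $\alpha-1>0$ we conclude $I_\alpha(X;Y)\le\epsilon$.

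I expect the only genuine obstacle to be executing the H\"older step cleanly: recognizing \autoref{eq:arimoto} as Sibson information at the tilted input, choosing the correct conjugate exponents and splitting of $g(y)$, and tracking the inequality direction through exponentiation (which is where $\alpha>1$ is used). After the lemma, the RDP step is a one-line convex-combination bound. I would also verify the degenerate cases: $\alpha=1$ (handled above), inputs with $\bp_x=0$ (dropped from the support), and outputs with $Q_Y(y)=0$ where $D_\alpha$ can be $+\infty$ so the bound is vacuous---none of which break the argument, and with $Q_Y=P_l$ and $\epsilon<\infty$ the divergences are automatically finite.
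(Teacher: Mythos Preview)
Your proof is correct, and it takes a genuinely different route from the paper's. Both arguments end by bounding $I_\alpha(X;Y)$ via a pairwise R\'enyi divergence $D_\alpha(P_k\|P_l)$ between neighboring-dataset output laws, but they reach that point differently. The paper keeps the \emph{marginal} $Y$ as the reference throughout: for $\alpha>1$ it applies Jensen's inequality (convexity of $z\mapsto z^\alpha$) to obtain $I_\alpha(X;Y)\le\max_x D_\alpha\big((Y|X=x)\,\|\,Y\big)$, and then invokes quasi-convexity of the R\'enyi divergence in its second argument to replace the mixture $Y$ by a single conditional $(Y|X=x')$; the case $\alpha=1$ is handled separately via convexity of KL. You instead prove a variational lemma via H\"older that holds for an \emph{arbitrary} reference $Q_Y$, and then specialize $Q_Y=P_l$ to a single conditional directly. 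Your route sidesteps the quasi-convexity property entirely and yields a slightly sharper intermediate bound (a $P$-weighted average rather than a max over $x$), at the cost of introducing an auxiliary $Q_Y$ and treating $\alpha=1$ by a limiting argument; the paper's route is marginally more self-contained but leans on a less elementary property of $D_\alpha$. Both are short and standard.
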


\begin{proof}
We first prove this for $\alpha=1$. By definition:
\begin{align}
    I(X;Y) &= \sum_x \sum_y \mathbb{P}(X=x, Y=y) \log \frac{\mathbb{P}(X=x, Y=y)}{\mathbb{P}(X=x) \mathbb{P}(Y=y)} \nonumber \\
    &= \sum_x \mathbb{P}(X=x) \sum_y \mathbb{P}(Y=y | X=x) \log \frac{\mathbb{P}(Y=y | X=x)}{\mathbb{P}(Y=y)} \nonumber \\
    &= \sum_x \mathbb{P}(X=x) D_1((Y | X=x) || Y), \label{eq:mutual_information}
\end{align}
where $D_1$ denotes KL divergence, which is equivalent to the order $\alpha=1$ R\'{e}nyi divergence. By convexity,
\begin{align*}
    D_1((Y | X=x) || Y) &\leq \sum_{x'} \mathbb{P}(X=x') D_1((Y | X=x) || (Y | X=x')) \\
    &\leq \sum_{x'} \mathbb{P}(X=x') \epsilon \quad \quad \text{since $\mathcal{M}$ is $(1,\epsilon)$-RDP} \\
    &= \epsilon.
\end{align*}
Plugging back into \autoref{eq:mutual_information} gives the desired result. For $\alpha > 1$, let $X_\alpha$ denote the $\alpha$-scaled distribution in the definition of Arimoto information, \emph{i.e.}, $\mathbb{P}(X_\alpha = x) \propto \mathbb{P}(X=x)^\alpha$. Then:
\begin{align*}
    I_\alpha(X;Y) &= \frac{\alpha}{\alpha-1} \log \sum_y \left( \sum_x \mathbb{P}(X_\alpha=x) \mathbb{P}(Y=y | X=x)^\alpha \right)^{1/\alpha} \\
    &= \frac{\alpha}{\alpha-1} \log \sum_y \mathbb{P}(Y=y) \left( \sum_x \mathbb{P}(X_\alpha=x) \frac{\mathbb{P}(Y=y | X=x)^\alpha}{\mathbb{P}(Y=y)^\alpha} \right)^{1/\alpha} \\
    &\leq \frac{1}{\alpha-1} \log \sum_y \mathbb{P}(Y=y) \left( \sum_x \mathbb{P}(X_\alpha=x) \frac{\mathbb{P}(Y=y | X=x)^\alpha}{\mathbb{P}(Y=y)^\alpha} \right) \quad \text{by convexity of $z \mapsto z^\alpha$} \\
    &\leq \max_x \frac{1}{\alpha-1} \log \left( \sum_y \mathbb{P}(Y=y) \frac{\mathbb{P}(Y=y | X=x)^\alpha}{\mathbb{P}(Y=y)^\alpha} \right) \quad \text{by monotonicity of $z \mapsto \frac{1}{\alpha-1} \log z$} \\
    &= \max_x D_\alpha((Y|X=x) || Y).
\end{align*}
The proof follows by quasi-convexity of R\'{e}nyi divergence and the fact that $\calM$ is $(\alpha,\epsilon)$-RDP.
\end{proof}

\paragraph{Mutual information bound for Gaussian mechanism.} The Gaussian mechanism satisfies $(\alpha,\epsilon)$-RDP with $\epsilon = \alpha \Delta^2 / 2 \sigma^2$ when the encoding function has $L_2$-sensitivity $\Delta$~\citep{mironov2017renyi}. This readily gives a bound for $I(X;Y)$ using \autoref{thm:renyi_bound} by setting $\alpha=1$. Below we present a tighter bound for the mutual information of Gaussian mechanisms.

\begin{theorem}
\label{thm:mutual_info_gaussian}
Let $\be_m$ be the encoding for the private record $\bz_m$ for $m=1,\ldots,M$, and let $\Delta = \max_{m,l} \| \be_m - \be_l \|_2$ be the $L_2$-sensitivity of the encoding function. Then the Gaussian mechanism (\emph{cf.} \autoref{eq:gaussian_mechanism}) satisfies:
\begin{equation}
    \label{eq:mutual_info_gaussian}
    I(X;Y) \leq -\sum_m \bp_m \log\bigg(\bp_m +(1-\bp_m) \exp \left( \frac{-\Delta^2}{2\sigma^2} \right)\bigg).
\end{equation}
\end{theorem}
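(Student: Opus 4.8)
The plan is to bound the mutual information $I(X;Y)$ of the Gaussian mixture by writing it as $I(X;Y) = \sum_m \bp_m \, \mathrm{KL}\!\left(\calN(\be_m,\sigma^2\mathbf{I}) \,\|\, Y\right)$, where $Y$ is the mixture $\sum_l \bp_l \,\calN(\be_l,\sigma^2\mathbf{I})$, and then lower bounding each KL term. A convenient handle for this is the variational/convexity identity $I(X;Y) = \min_{Q} \sum_m \bp_m \,\mathrm{KL}\!\left(\calM(\be_m)\,\|\,Q\right)$, so that upper bounding $I(X;Y)$ amounts to exhibiting \emph{any} single reference distribution $Q$ and controlling $\sum_m \bp_m \,\mathrm{KL}(\calN(\be_m,\sigma^2\mathbf{I})\,\|\,Q)$ from above. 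Taking $Q = Y$ itself recovers the exact expression, so instead I would keep $Q=Y$ but lower bound the density of the mixture pointwise.

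Concretely, first I would fix $m$ and lower bound the mixture density at a point $y$ by $p_Y(y) = \sum_l \bp_l\, \phi_l(y) \geq \bp_m\,\phi_m(y) + (1-\bp_m)\min_{l\neq m}\phi_l(y)$, and then replace $\min_{l\neq m}\phi_l(y)$ by a multiple of $\phi_m(y)$ using the Gaussian ratio bound $\phi_l(y)/\phi_m(y) = \exp\!\big(\langle y-\be_m,\be_l-\be_m\rangle/\sigma^2 - \|\be_l-\be_m\|^2/2\sigma^2\big)$; after taking expectations under $\be_m+\calN(0,\sigma^2\mathbf{I})$ the linear term vanishes in expectation (or, more carefully, I would use Jensen: $\bbE_m[\log(\bp_m + (1-\bp_m)\phi_l(y)/\phi_m(y))] \geq \log(\bp_m + (1-\bp_m)\bbE_m[\phi_l(y)/\phi_m(y)])$ only after checking the direction, since $\log$ is concave we need the sign of the coefficient). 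The cleaner route, which I expect to work, is: $\mathrm{KL}(\calN(\be_m,\sigma^2\mathbf{I})\,\|\,Y) = -\bbE_{y\sim\calN(\be_m,\sigma^2\mathbf{I})}\big[\log\sum_l \bp_l \tfrac{\phi_l(y)}{\phi_m(y)}\big]$, then by concavity of $\log$ and Jensen over the outer expectation, $\leq -\log \bbE_y\big[\sum_l \bp_l \tfrac{\phi_l(y)}{\phi_m(y)}\big]^{-1}$ — hmm, the sign flips the wrong way, so instead I would apply Jensen \emph{inside}: since $t\mapsto \log t$ is concave, $\log\sum_l \bp_l r_l(y) \geq \sum_l \bp_l \log r_l(y)$ is false-direction too; the right move is to keep only two terms and use that $\log(\bp_m + (1-\bp_m)s)$ is concave in $s$, then push $\bbE_y$ through: $\bbE_y[\log(\bp_m+(1-\bp_m)r_l(y))] \le \log(\bp_m + (1-\bp_m)\bbE_y[r_l(y)])$, and compute $\bbE_{y\sim\calN(\be_m,\sigma^2\mathbf{I})}[\phi_l(y)/\phi_m(y)] = \exp(-\|\be_l-\be_m\|^2/2\sigma^2) \leq \exp(-\Delta^2/2\sigma^2)$...

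wait — that uses $\|\be_l-\be_m\| \le \Delta$, which gives $\exp(-\|\cdot\|^2/2\sigma^2) \ge \exp(-\Delta^2/2\sigma^2)$, the \emph{wrong} inequality for an upper bound. This is the main obstacle: the naive "drop to the worst pair" step wants $\exp(-\Delta^2/2\sigma^2)$ to be an \emph{upper} bound on the cross-terms, but it is actually a \emph{lower} bound on any single ratio's expectation. The resolution, which I believe is what the authors do, is to bound $\mathrm{KL}(\calN(\be_m,\sigma^2\mathbf{I})\,\|\,Y)$ from above by replacing $Y$ with the \emph{two-point} mixture $\bp_m\,\calN(\be_m,\sigma^2\mathbf{I}) + (1-\bp_m)\,\calN(\be_{l^\star},\sigma^2\mathbf{I})$ where $\be_{l^\star}$ is the point \emph{closest} to $\be_m$ — dropping mass from far components only \emph{decreases} the mixture density in the region where $\phi_m$ concentrates, hence \emph{increases} the KL, so this is not automatically valid either; one instead uses that KL to a mixture is at most KL to any component's "neighborhood" via the data-processing / grouping argument, or directly: $-\log(\bp_m + \sum_{l\neq m}\bp_l r_l(y)) \le -\log(\bp_m + (1-\bp_m)\bar r(y))$ where $\bar r$ is a weighted average, then Jensen on the concave $\log(\bp_m+(1-\bp_m)\cdot)$ and the expectation identity $\bbE_{y\sim\calN(\be_m,\sigma^2)}[\phi_l(y)/\phi_m(y)] = \exp(+\|\be_l-\be_m\|^2/2\sigma^2)$ — recomputing the Gaussian integral carefully, the cross normalization gives $\exp(\|\be_l - \be_m\|^2/2\sigma^2)$ with a \emph{plus} sign only if one expands incorrectly; the correct value is $\exp(-\|\be_l-\be_m\|^2/(4\sigma^2))\cdot(\text{something})$... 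I would pin this Gaussian moment down first. Once the correct constant is $\exp(-\Delta^2/2\sigma^2)$ as an upper bound on the relevant expectation (which forces a specific, careful choice of which Jensen step and which normalization), the bound $I(X;Y) = \sum_m \bp_m \mathrm{KL}(\cdots) \le -\sum_m \bp_m \log(\bp_m + (1-\bp_m)e^{-\Delta^2/2\sigma^2})$ follows by summing. So: \textbf{Step 1}, write $I(X;Y)$ as the $\bp$-average of KLs to the mixture; \textbf{Step 2}, for each $m$ bound that KL above by a two-component comparison using concavity of $s\mapsto\log(\bp_m+(1-\bp_m)s)$ and Jensen over the Gaussian expectation; \textbf{Step 3}, evaluate the Gaussian moment $\bbE_{\calN(\be_m,\sigma^2\mathbf{I})}[\phi_l/\phi_m]$ exactly and bound it by $e^{-\Delta^2/2\sigma^2}$ using $\|\be_m-\be_l\|\le\Delta$; \textbf{Step 4}, sum over $m$. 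The delicate point throughout is getting the \emph{direction} of every inequality right — the sensitivity $\Delta$ must enter so as to \emph{inflate} the exponential's suppression, not attenuate it — which I expect requires choosing the comparison mixture and the order of Jensen very deliberately.
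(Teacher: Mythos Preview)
Your decomposition $I(X;Y)=\sum_m \bp_m\, D_1\!\big(\calN(\be_m,\sigma^2\mathbf{I})\,\|\,Y\big)$ is correct and is exactly where the paper starts (written as $H(Y)-H(Y|X)$). But the proposal breaks at Step~3: the moment you compute is
\[
\bbE_{y\sim\calN(\be_m,\sigma^2\mathbf{I})}\!\left[\frac{\phi_l(y)}{\phi_m(y)}\right]
=\int \phi_m(y)\,\frac{\phi_l(y)}{\phi_m(y)}\,dy
=\int \phi_l(y)\,dy = 1,
\]
not $\exp(-\|\be_l-\be_m\|^2/2\sigma^2)$. So the quantity you want to plug in after Jensen is identically $1$, and the two-component/``concave in $s$'' route collapses to a triviality. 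This is why every Jensen step you tried seemed to point the wrong way: the target constant $\exp(-\Delta^2/2\sigma^2)$ does \emph{not} arise from $\bbE_m[\phi_l/\phi_m]$ at all --- it arises from $\bbE_m[\log(\phi_l/\phi_m)] = -D_1(P_m\|P_l) = -\|\be_m-\be_l\|^2/(2\sigma^2)$.

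The missing idea (which is the content of the Nielsen bound the paper cites) is to recognize the KL-to-mixture as a log-sum-exp and use its \emph{convexity}. Writing $r_l(y)=\phi_l(y)/\phi_m(y)$,
\[
D_1(P_m\|Y)
= -\,\bbE_{y\sim P_m}\!\left[\log\sum_l \bp_l\, e^{\log r_l(y)}\right],
\]
and since $(x_1,\ldots,x_M)\mapsto \log\sum_l \bp_l e^{x_l}$ is convex, Jensen over the outer expectation gives
\[
D_1(P_m\|Y)\ \leq\ -\log\sum_l \bp_l\, e^{\,\bbE_{y\sim P_m}[\log r_l(y)]}
= -\log\sum_l \bp_l\, e^{-D_1(P_m\|P_l)}.
\]
Now the sensitivity enters with the correct sign: $D_1(P_m\|P_l)=\|\be_m-\be_l\|^2/(2\sigma^2)\le \Delta^2/(2\sigma^2)$, hence $e^{-D_1(P_m\|P_l)}\ge e^{-\Delta^2/2\sigma^2}$ for $l\neq m$, so the inner sum is $\ge \bp_m+(1-\bp_m)e^{-\Delta^2/2\sigma^2}$ and the $-\log$ is bounded above by the stated quantity. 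Summing over $m$ with weights $\bp_m$ yields the theorem. The paper's proof is exactly this, packaged as ``apply Section~3.1 of Nielsen (2017) to bound $H(Y)$''.
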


\begin{proof}
First note that $I(X;Y) = H(Y)-H(Y|X)$ with $H(Y|X) = \frac{M}{2}\log(2\pi e \sigma^2)$.
By Section 3.1 of \cite{nielsen2017w}, because $Y$ is a mixture of Gaussians we have the following bound for $H(Y)$:
\begin{align*}
    H(Y) &\leq \sum_m \bp_m H\bigg(\mathcal{N}(\be_m, \sigma^2\mathbf{I}_{dxd})\bigg) - \sum_m \bp_m \log\left(\sum_l \bp_l \exp\bigg(-D_1\bigg(\mathcal{N}(\be_m, \sigma^2\mathbf{I}_{dxd}), \mathcal{N}(\be_l, \sigma^2\mathbf{I}_{dxd})\bigg)\bigg)\right) \\
    &\leq \frac{M}{2}\log\bigg(2\pi e \sigma^2\bigg) -\sum_m \bp_m \log\bigg(\bp_m + (1-\bp_m) \exp\left(\frac{-\Delta^2}{2\sigma^2}\right)\bigg).
\end{align*}
Combining the two quantities gives the desired result.
\end{proof}
\end{document}